\documentclass[runningheads]{llncs}

\usepackage[american]{babel}
\usepackage{mathtools}
\usepackage{amssymb}
\usepackage{booktabs}
\usepackage{tikz}
\usepackage[linesnumbered,ruled,vlined]{algorithm2e}
\usepackage{nicefrac}
\usepackage{bm}
\usepackage{dsfont}
\usepackage{comment}
\usepackage{hyperref}

\newcommand{\mycircle}[2]{
    \begin{tikzpicture}[scale=#1]
        \draw[#2,fill=#2] (0,0) circle (0.5);
    \end{tikzpicture}
}

\newcommand{\expe}{{\mathbb{E}}}
\newcommand{\reals}{\mathbb{R}}

\begin{document}

\title{How Much Imprecision is Enough Imprecision in my Classifier? A Practical Elicitation Procedure}

\titlerunning{Imprecision Elicitation for Set-Valued Classifiers}

\author{Anonymous}
\author{Victor F. Lopes de Souza\inst{1,2,4} \and
Sébastien Destercke\inst{3} \and
Abdelhak Imoussaten\inst{4}}

\authorrunning{Lopes de Souza et al.}

\institute{
    EuroMov Digital Health in Motion, Univ. Montpellier, IMT Mines Alès, Montpellier, France\\
\and
    LIRMM, Université de Montpellier, CNRS, Montpellier, France
\and
   Heudiasyc, Université de Technologie de Compiègne, CNRS, Compiègne, France
\and
   SyCoIA, IMT Mines Alès, Alès, France
}

\maketitle

\begin{abstract}
Set-valued classifiers, whether derived from precise probabilities and an adapted cost function, from convex sets with a robust inference mechanism, or from conformal methods, are routine options to obtain more robust, trustworthy predictions. However, there is a lack of operational tools to measure how robust or imprecise a given user is ready to be when receiving predictions, that is how much precision he/she is ready to let go in exchange of more accuracy. This is why we propose, in this paper, practical and operational elicitation procedures to measure the user proneness to set-valued predictions. The effectiveness of the iterative elicitation procedure in converging to the target parameter value is demonstrated on both tabular and image datasets drawn from standard machine learning benchmarks. The results show that the procedure also presents the user with a small number of instances, highlighting the practicality of the approach for real-world applications aimed at identifying the decision maker's optimal behavior when faced with imprecision. 

\vspace{1em}

{Code available at \url{https://anonymous.4open.science/r/elicitation_imprecision_utilities-11A2}}

\keywords{Set-valued classification \and Imprecision elicitation \and Robust prediction \and Decision making}
\end{abstract}

\section{Introduction}

Quantifying uncertainty in predictive classification models has always been an important issue, but it has become a major challenge for ML models in recent years. This has allowed the development of many different lines of research, among which are uncertainty measures and their disentanglement (mostly between so-called epistemic and aleatoric uncertainties) \cite{hullermeier2021aleatoric}, evidential \cite{yuan2020evidential} and Bayesian deep learning approaches \cite{mackay1992practical}, or the production of set-valued predictions. 

This latter trend has given rise to many methodological alternatives, such as using conformal approaches \cite{vovk2005algorithmic}, deriving set predictions from probabilities and utilities on sets \cite{del2009learning} \cite{imoussaten2022cautious}, deriving set predictions from probability (a.k.a. credal) sets \cite{zaffalon2001statistical}, or skeptical inferences consisting in predicting all potentially optimal predictions \cite{nguyen2018reliable}. 

Together with the problem of learning such set-valued classifiers comes the need to evaluate set-valued predictions, and therefore to define well-founded valuation functions. Such functions indeed fill two essential purposes: in the learning phase, they allow to clearly define what is an optimal classifier, and in the production/inference phase, they allow to send the most promising subset to the end-user. However, allowing for set-valued predictions means that we must achieve a compromise between being accurate (containing the true class) and being informative (having a set as small as possible given our information). 

A way to do that is to define a utility function over set-valued prediction, extending classical accuracy or more complex cost functions, that typically relies on a parametric form where the parameter is used to settle the trade-off between accuracy and informativeness  \cite{del2009learning} 
\cite{imoussaten2023study} \cite{zaffalon2012evaluating}. Another connected trend is the development of proper scoring rules for models outputting as predictions convex sets of probabilities, with a possible application of such scores being to learn optimal credal models \cite{hofman2024quantifying}. Note however that this differs from considering utilities for set-valued predictions, as those can indeed come from credal sets, but can also be produced by other approaches. 

As interesting as those research may be, they do not address the question of how to choose the right value for the mentioned trade-off between accuracy and informativeness.  To our knowledge, we have nowadays no operational means to identify this right value. As this is to some extent a subjective assessment that depends both on the end-user attitude towards imprecision and on the applicative context, it seems natural to seek an operational procedure where we elicit this value from the end-user. 
This is the problem we consider in this paper, that is structured as follows: Section~\ref{sec:setvalclass} recalls the necessary basics of set-valued classification, and Section~\ref{sec:elicitation} exposes the bulk of our proposal, which consists in two elicitation procedures to obtain the user behaviour towards set-valued predictions. Section~\ref{sec:related_works} discusses related work, and Section~\ref{sec:experiments} presents some experiments.  

\section{Set-valued classifiers}
\label{sec:setvalclass}

Given some input space $\mathcal{X}$ and some finite output space $\mathcal{Y}=\{y_1,\ldots,y_K\}$, we assume as is classically done in machine learning that data are issued from a distribution $P$ over $\mathcal{X} \times \mathcal{Y}$. We consider that a set-valued classifier $h:\mathcal{X} \to 2^\mathcal{Y}$ outputs as prediction a subset $h(x)=Y$ of $\mathcal{Y}$  when observing an instance $x$ of the input space, and is an element of a large hypothesis space $\mathcal{H}$.

We also consider that the end-user has some weak order $\succ$ over the different models of $\mathcal{H}$, where $h \succ h'$ means that the user prefers the predictive behaviour of $h$ over $h'$, but that this weak order is not known to us. Our goal in this paper is to discuss the assumptions we can make about this weak order, and how we can uncover the preferences of the user through an operational elicitation procedure.

A classical means to model such weak order is to consider that it can be numerically represented by a well-defined utility function $u:\mathcal{Y} \times 2^\mathcal{Y} \to \reals$, in which case we can use the classical expected utility criterion
\begin{align}\label{eq:compa_class}
h \succeq_{u} h' \Leftrightarrow \expe_P(u(h)) \geq   \expe_P(u(h'))\end{align}
where $\expe_P(u(h))=\int_{\mathcal{X} \times \mathcal{Y}} u(h(x),y) d P(x,y)$. That is, a model $h$ is preferred to $h'$ whenever its expected utility is higher. 
The problem mentioned in the last paragraph then amounts to uncovering the exact form of the utility function, which in the case of set-valued prediction is often parametric. This is the approach we will follow in Section~\ref{sec:elicitation}.

While the above framework gives us a theoretical way of define our problem, it is of course impossible to obtain a weak order on a potentially infinite amount of models, and even to compute the expected values, as those would require an access to the theoretical, unobserved $P$. In practice we will most of the time have at our disposal a finite number of observed (i.i.d.) data $\mathcal{D}=\{(x_1,y_1),\ldots,(x_n,y_n)\}$, together with some trained models $h_1,\ldots,h_T$, that we have to use in order to elicit the preference structure $\succ$. Example~\ref{exm:setclassifiers} illustrates the situation we are facing. 

\begin{example}\label{exm:setclassifiers} Figure~\ref{fig:setcomparison} pictures the situation when $\mathcal{Y}=\{\mycircle{0.2}{red},\mycircle{0.2}{blue},\mycircle{0.2}{black}\}$, with some details about four classifiers. While classifier $h_1$ tends to be very informative, it also seems to commit quite a number of mistakes. At the other end of the spectrum is classifier $h_T$, that is poorly informative, yet always manages on the displayed instances to capture the true class. Classifiers $h_2$ and $h_3$ are in-between those two situations, and can be considered difficult to compare.  
\end{example}

\begin{figure}\small \centering
	\begin{tabular}{ccccc@{\hspace{0.05pt}}c@{\hspace{0.05pt}}c}
    Instance & True class & $h_1(x)$ & $h_2(x)$ & $h_3(x)$ & \ldots &  $h_T(x)$\\
	\hline
	$x_1$ & $\mycircle{0.2}{red}$  & $\{\mycircle{0.2}{red}\}$ & $\{\mycircle{0.2}{red},\mycircle{0.2}{blue}\}$  & $\{\mycircle{0.2}{red}\}$ &  & $\{\mycircle{0.2}{red},\mycircle{0.2}{black}\}$ \\	
	$x_2$ & $\mycircle{0.2}{blue}$  & $\{\mycircle{0.2}{black}\}$ & $\{\mycircle{0.2}{blue}\}$  & $\{\mycircle{0.2}{blue},\mycircle{0.2}{black}\}$ &  & $\{\mycircle{0.2}{red},\mycircle{0.2}{black},\mycircle{0.2}{blue}\}$ \\	
	$x_3$ & $\mycircle{0.2}{red}$  & $\{\mycircle{0.2}{red},\mycircle{0.2}{blue}\}$ & $\{\mycircle{0.2}{red},\mycircle{0.2}{black}\}$  & $\{\mycircle{0.2}{black}\}$ &   & $\{\mycircle{0.2}{red},\mycircle{0.2}{blue},\mycircle{0.2}{black}\}$ \\
	\vdots & \vdots & \vdots & \vdots & \vdots & & \vdots \\
	$x_n$ & $\mycircle{0.2}{black}$ &$\{\mycircle{0.2}{blue}\}$ & $\{\mycircle{0.2}{red},\mycircle{0.2}{blue}\}$  & $\{\mycircle{0.2}{blue},\mycircle{0.2}{black}\}$ &   & $\{\mycircle{0.2}{red},\mycircle{0.2}{black}\}$ \\
	\end{tabular}
\caption{set-valued classifiers comparison: illustration}
\label{fig:setcomparison}
\end{figure}

Of course, it makes no sense to ask a user to compare, say, a hundred results of classification and to form an opinion about which classifier is preferable. In practice, we want to expose the user to a small subset of instances on which the user will form a preference.  

In general, we will not assume any specific relationship between the classifiers, however in the literature it is quite common to consider \emph{nested} classifiers, where $h_i(x) \subseteq h_{i+1}(x)$. It is actually a quite common case when building set-valued predictors, as many set-valued predictions techniques rely on a hyper-parameter fixing the amount of imprecision: utility parameter for probabilistic approaches, confidence level for conformal approaches, or learning speed parameter for credal approaches. 

\section{Eliciting disposition to imprecision}
\label{sec:elicitation}

As we said, we will treat the problem of eliciting the user preference by presenting a subset of specific instances to him/her. We first consider, from a rather qualitative point of view, the instance-wise preferences a user may have for a series of peculiar situations regarding an instance $(x,y)$ and two classifiers $h,h'$. We will then deal with the numerical aspects of the elicitation procedure.

\subsection{Some instance-wise, qualitative aspects}
\label{sec:qualitative_aspects}

We will denote by $\succ^x$ a preference the user have for a specific instance $x$. When comparing $h(x)$ and $h'(x)$, we have a first set of situations where we think the preferences are cognitively clear :
\begin{itemize}
	\item Right and wrong: in the case where $y \in h(x)$ and $y \not\in h'(x)$, then $h \succ^x h'$, as a user would always prefer an accurate answer over a wrong one. This would be the case for $x_3$ in Figure~\ref{fig:setcomparison} when comparing $h_2$ and $h_3$, as $h_3$ makes an inaccurate answer. 
	\item Rights and Included: in the case where $y \in h(x)$, $y \in h'(x)$ and $h(x) \subset h'(x)$, then $h \succ^x h'$, as if two predictions are accurate, a more informative one is clearly preferable. 
\end{itemize}
We then have the situations where the preferences are less clear from a cognitive standpoint: 
\begin{itemize}
	\item Rights, not included: in the case where $y \in h(x)$, $y \in h'(x)$ and $h(x) \not\subseteq h'(x)$, $h'(x) \not\subseteq h(x)$, then there is no clear preference between the two situations, especially if $|h(x)|=|h'(x)|$. If this last inequality is not true, we could think that in general $h \succ^x h'$ if $|h(x)| < |h'(x)|$, as one may again prefer a more informative prediction. However, it is not entirely clear that this will always be the case, in particular if there are some relationships between the classes: one may prefer a more imprecise but consistent prediction to a less consistent yet more precise prediction. Think for instance of a case where the true class is $\{car\}$, and compare a classifier $h$ predicting three vehicles, among which a car, with a classifier $h'$ predicting two classes: car and an animal. It is unclear that $h'$ will necessarily be preferred by a user, as it would be more precise but semantically more heterogeneous. Finally, note that this situation cannot happen if we compare a precise with an imprecise classifier, or a sequence of nested classifiers. 
	\item Wrongs and included: in the case where $y \not\in h(x)$, $y \not\in h'(x)$ and $h(x) \subset h'(x)$, it is tempting to consider the default assumption that $h \succ^x h'$, as being wrong while providing less possible answers may be perceived as better than still being wrong while giving more possible answers. However, from an applicative perspective, the answer does not seem so obvious. Think for instance of a comparison between a precise wrong classifier and an imprecise and also wrong classifier: it may be that the end-user prefers the imprecise classifier, as more imprecision may be perceived as a signal of unreliability of the prediction. 
	\item Wrongs, not included: in the case where $y \not\in h(x)$, $y \not\in h'(x)$ and $h(x) \not\subseteq h'(x)$, $h'(x) \not\subseteq h(x)$, there is again no clear preference between the two situations, especially if $|h(x)|=|h'(x)|$. If this last inequality is not true, we could think that in general $h \succ^x h'$ if $|h(x)| < |h'(x)|$, but this seems even less clear than when predictions contains the ground truth, as the previous comment also applies. 
\end{itemize}

While these remarks may be obvious, they also indicate that we cannot present any subset of instances to the end user, and must take care of presenting a variety of situations. Take for example the nested case where $h_1$ is a precise, determinate classifier. If we were to present a subset of instances where $h_1$ is always right, then obviously we have $h_1 \succ^x h_2$ for every instance in this subset, and the reverse would happen if we selected a subset of instances where $h_2$ is imprecise but right, while $h_1$ is wrong. 

The remarks we made here are reasonable whatever the utility or loss function we use to evaluate classifiers. Let us now have a closer look at numerical procedure to elicit our robustness behaviour when considering specific utility functions. 

\subsection{Numerical elicitation}


We now come back to more numerical considerations, as well as to our initial purpose which was to uncover the end-user preferences with regard to imprecision in a given problem. We will assume that our utility function depends on a single parameter $\theta$, and for a given value of $\theta \in [\underline{\theta},\overline{\theta}]$, we will denote that corresponding utility $u_\theta(h(x),y)$. We will also assume that the end-user behaviour with respect to imprecision corresponds to a specific yet unknown value $\theta^*$. Our goal is then to identify the value $\theta^*$ that corresponds to the end user preferences regarding imprecision, by presenting the end user with a subset $\mathcal{D}' \subseteq \mathcal{D}$ and the results of two classifiers $h,h'$ such that it gives us useful feedback about $\theta^*$, typically by reducing the possible values of $\theta$ by half.

\begin{remark}
The utility function  mentioned in Equation~\eqref{eq:compa_class} used to determine the weak order between classifiers is often of the following generic form:
\begin{equation}
	\label{eq:discount_utility} u(h(x),y)=g(|h(x)|) \mathds{1}_{y \in h(x)}
\end{equation}
where $g$ is decreasing in $|h(x)|$ (the bigger the returned set, the lower the reward). Such forms are usually asked to be $\nicefrac{1}{|h(x)|}$-convex, meaning that 
$$ \frac{1}{g(|h(x)|+1)} \leq \frac{\frac{1}{g(|h(x)|)} + \frac{1}{g(|h(x)|+2)}  }{2},$$
which ensures that optimal set-valued predictions can be determined efficiently~\cite{mortier2021efficient}. In this paper, we consider a specific instance of such utility functions.
\end{remark}

Just as we denoted by $\succ^x$ a preference the user have for a specific instance $x$, we will denote by $\succ^\mathcal{D'}$ a preference between models given the predictions for some subset $\mathcal{D'} \subseteq \mathcal{D}$ of instances. Considering the empirical approximation $\expe_P(u_\theta(h)) \approx \sum_{x \in \mathcal{D}'} u_\theta(h(x),y)=\hat{\expe}_P(u_\theta(h)) $, an assessment $h \succ^{\mathcal{D'}} h'$ would then amount to state
\begin{equation}\label{eq:pref_values}
	\sum_{x \in \mathcal{D}'} u_\theta(h(x),y)\geq\sum_{x \in \mathcal{D}'} u_\theta(h'(x),y)
\end{equation}
which can then be transformed into an assessment over the potential value of $\theta^*$, as Equation~\eqref{eq:pref_values} tells us that 
{\small \begin{equation}\label{eq:pref_inequality}\theta^* \in \{\theta \in [\underline{\theta},\overline{\theta}]: \sum_{x \in \mathcal{D}'} u_\theta(h(x),y)\geq\sum_{x \in \mathcal{D}'} u_\theta(h'(x),y)\},\end{equation}}
For a given value $\delta$ of $\theta$, we will denote by $\mathcal{D}_{\delta} \subseteq \mathcal{D}$ a subset of instances such that 
\begin{equation}
	\hat{\expe}_P(u_\delta(h))= \hat{\expe}_P(u_\delta(h')).
\end{equation}
In general, the set in Equation~\eqref{eq:pref_inequality} may not be an interval of values, especially if Equation~\ref{eq:pref_values} is non-linear in $\theta$. In fact, one has the following property

\begin{proposition}\label{prop:interval_form}
	The set 
	$$\{\theta \in [\underline{\theta},\overline{\theta}]: \sum_{x \in \mathcal{D}'} u_\theta(h(x),y)\geq\sum_{x \in \mathcal{D}'} u_\theta(h'(x),y)\}$$
	is an interval iff $\sum_{x \in \mathcal{D}'} u_\theta(h(x),y) - \sum_{x \in \mathcal{D}'} u_\theta(h'(x),y)$ is quasi concave over $[\underline{\theta},\overline{\theta}]$. 
\end{proposition}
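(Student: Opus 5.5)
Write $F(\theta)=\sum_{x \in \mathcal{D}'} u_\theta(h(x),y) - \sum_{x \in \mathcal{D}'} u_\theta(h'(x),y)$ on $[\underline{\theta},\overline{\theta}]$. The set in the proposition is the upper level set $S_0=\{\theta : F(\theta)\geq 0\}$. The plan is to work entirely with the standard characterisation of quasi-concavity on an interval of $\reals$: $F$ is quasi-concave iff $F(\lambda\theta_1+(1-\lambda)\theta_2)\geq \min(F(\theta_1),F(\theta_2))$ for all $\theta_1,\theta_2$ and $\lambda\in[0,1]$. Equivalently, every upper level set $S_c=\{\theta: F(\theta)\geq c\}$ is convex, and in dimension one convex means an interval.

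\emph{Sufficiency.} Assume $F$ is quasi-concave. Take $\theta_1,\theta_2\in S_0$ and any $\theta$ between them, so $\theta=\lambda\theta_1+(1-\lambda)\theta_2$ for some $\lambda\in[0,1]$. Then
\[
F(\theta)\geq \min\bigl(F(\theta_1),F(\theta_2)\bigr)\geq 0,
\]
so $\theta\in S_0$. Hence $S_0$ is convex, that is, an interval (possibly empty or a single point). This is just the level-set characterisation applied at $c=0$.

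\emph{Necessity.} This is the delicate step. Read literally, with only the single level $c=0$, the converse is false. For example, take $F$ with $F(\underline{\theta})=1$, then a dip to $\tfrac12$, then a rise back to $1$, and $F<0$ beyond that. Then $S_0$ is an interval, yet $S_{3/4}$ is not, so $F$ is not quasi-concave. The equivalence can only hold if "the set is an interval" is required uniformly in the threshold. I would therefore prove the converse in the following form: if $\{\theta: F(\theta)\geq c\}$ is an interval for every $c\in\reals$, then $F$ is quasi-concave.

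Two readings fit the paper. One is that the comparison may be shifted by a constant, for instance by adding to one side fixed instances whose utilities do not depend on $\theta$. The other is that the preference assessment should be well behaved for every tie level. Under either reading the proof is direct. Given $\theta_1,\theta_2$, set $c=\min(F(\theta_1),F(\theta_2))$. Both points lie in $S_c$, which is an interval, so every point between them lies in $S_c$. That is exactly the quasi-concavity inequality.

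I expect the main obstacle to be this interpretive point rather than any computation. Once it is settled, both directions follow in one line from the definition.

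My write-up would do three things. It would state the sufficiency direction as the operationally useful one: quasi-concavity of $F$ guarantees that a preference answer restricts $\theta^*$ to an interval. It would state necessity in the all-levels form. And it would add a remark with the counterexample above, showing that the single-level version cannot be an equivalence.
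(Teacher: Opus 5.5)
Your sufficiency argument follows the paper's route: the level-set characterisation of quasi-concavity, applied at level $0$. Your objection to the necessity direction is also correct. The paper's proof asserts that $\{\theta:\zeta(\theta)\geq 0\}$ is an interval iff $\zeta$ is quasi-concave. It justifies this by the fact that \emph{all} level sets $\{\theta:\zeta(\theta)>c\}$ are intervals iff $\zeta$ is quasi-concave. That fact transfers to the single level $0$ only in the `if' direction, which is exactly the gap you identified.

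Your counterexample is valid. A simpler one is $F(\theta)=(\theta-2)^2$ on $[1,3]$: there $S_0=[1,3]$, yet $F(2)<\min(F(1),F(3))$. So your version, with the converse stated for every level $c\in\reals$, is what the paper's argument actually proves.

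One small caution about your motivation. The ``shift by $\theta$-independent instances'' reading does not by itself deliver the all-levels hypothesis. Such instances only move the difference by the constants they contribute. Under Equation~\eqref{eq:param_utility} these are integers, since a correct singleton scores $1$ and a wrong set scores $0$. Your argument, however, needs the level $c=\min(F(\theta_1),F(\theta_2))$, which can take any value. It is cleaner to state the all-levels condition as an explicit hypothesis of the converse rather than derive it from that reading.
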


\begin{proof}
	The proof simply relies on the fact that the set $\{\theta \in [\underline{\theta},\overline{\theta}]: \zeta(\theta) \geq 0\}$ is an interval if and only if $\zeta$ is quasi-concave on the domain $[\underline{\theta},\overline{\theta}]$. Indeed, the level sets $\{\theta:\zeta(\theta)>c\}$ of $\zeta$ are intervals if and only if $\zeta$ is quasi-concave.
\end{proof}

For instance, this condition is not met if $u_\theta(h(x),y)$ is strictly convex (or concave) in $\theta$, indicating that meeting this constraint can be quite tedious.  It is of course possible, but impractical to work with non-convex regions of the parameter $\theta$. In the next section, we will look in particular at the parametric form
\begin{equation}\label{eq:param_utility}
	u_\theta(h(x),y)= \left(\theta + \frac{1-\theta}{|h(x)|}\right)\frac{1}{|h(x)|}  \mathds{1}_{y \in h(x)} ,
	\end{equation}
that is a quite common choice for set-valued prediction utility, and also has the interest that it is linear in its parameter $\theta$ \cite{zaffalon2012evaluating}, therefore meeting the condition of Proposition~\ref{prop:interval_form}. Note that the parameter $\theta$ should lie in the interval $[1,3]$: indeed, any value below $1$ would amount to selecting only precise predictions, in which case considering set-valued classifiers would make no sense, and any value above $3$ would amount to systematically choosing a prediction containing 2 classes rather than a correct single-class prediction—even when all the probability is concentrated on a single class, meaning that we will systematically be imprecise in our predictions (something that is hardly acceptable for most users).

Let us now study two different approaches to parameter elicitation: both are of incremental nature, the first assuming that we have at our disposal large samples of situations in which we can find the adequate data set, the second being a greedy algorithm building a data set from successive instances that tries to adapt to the current situation. 

\subsection{Batch approach}

In order to allow the user to compare the general behavior of the models, we can select typical scenarios, i.e., predictions on subsets of instances, which can provide information about the decision-maker’s preferences. To this end, we conflate a model $h \in \mathcal{H}$ with its confusion matrix denoted $C(h)$ regarding the instances in some hypothetical ${\mathcal{D'}}$.
Let us consider a set-valued classification problem with $K$ classes. The confusion matrix $C(h)$ of a set-valued classifier can be represented as follows:
{\small \begin{equation}
\label{eq:cm}
    C(h)=\begin{pmatrix} 
    n^h(\{y_1\},y_1) & \ldots & n^h(\{y_1\},y_K)\\
    n^h(\{y_2\},y_1) & \ldots & n^h(\{y_2\},y_K)\\
    n^h(\{y_1,y_2\},y_1) & \ldots & n^h(\{y_1,y_2\},y_K)\\
    \ldots \\ 
    n^h(\mathcal{Y},y_1) & \ldots & n^h(\mathcal{Y},y_K)\\
    \end{pmatrix}
\end{equation}}

where for $A \subseteq \mathcal{Y}$ and $y \in \mathcal{Y}$, $n^h(A,y)$ represents the portion of instances in ${\mathcal{D'}}$ that are predicted $A$ by $h$ while the true class is $y$. But in the case of a utility function as in Equation~\eqref{eq:param_utility}, $C(h)$ can be reduced to the $K \times 2$ following matrix representation: 
\begin{equation}
\label{eq:cm}
    C(h)=\begin{pmatrix}
      R & W \\
     n^h_1 & n^h_{K+1}\\
     n^h_2 & n^h_{K+2}\\
     \ldots & \ldots \\ 
    | n^h_K & n^h_{2K}\\
    \end{pmatrix}
\end{equation}
where $n^h_i$ are percentages of realizations and sum up to $1$, when $i<K+1$, $n^h_i$ represents the number of predictions of $h$ containing $i$ elements including the correct one, and when $i>K$, $n^h_i$ represents a prediction containing $i-K$ elements excluding the correct one. Note that $n^h_{2K}=0$ all the time, as we assume the true class to be in $\mathcal{Y}$.


Suppose we have narrowed down the value of the parameter to $\theta^* \in [\beta, \gamma]$ and consider the utility given by Equation~\eqref{eq:param_utility} together with the inequality from Equation~\eqref{eq:pref_values}. It follows that, depending on the user’s preference between $h$ and $h'$, the following gap will be positive or negative: 
\begin{align} \label{eq:perf_difference}
\Delta(h,h')& := \sum_{x \in \mathcal{D'}} u_\theta(h(x),y) - \sum_{x \in \mathcal{D'}} u_\theta(h'(x),y) \\ &=\sum_{i=1}^K n^h_{i} \left(\frac{\theta(i-1) + 1}{i^2}\right)  - \sum_{i=1}^K n^{h'}_{i} \left(\frac{\theta(i-1)+1}{i^2}\right)\nonumber \\
&= \sum_{i=1}^K (n^h_{i} - n^{h'}_{i}) \left(\frac{\theta(i-1)+1}{i^2}\right) \nonumber
\end{align}
Remark that the above equality only depends on the fractions of samples for which the true class is within the prediction, yet it is to be reminded that increasing $n^{h'}_{i}$ for $i \leq K$ while not increasing any $n^h_{i}$ for $i \leq K$ may require to choose a pair of samples where one is right, the other wrong. This raises the question of whether we should associate non-null utilities to incorrect predictions, or equivalently non-zero loss to correct ones? Discussing such a question in details is however out of the scope of the current study. Coming back to our considered setting, we have 
\begin{equation}\label{eq:delta_null}\Delta(h,h')=0 \Leftrightarrow \theta=\frac{\sum \limits_{i=1}^K (n^{h}_{i} - n^{h'}_{i}) \cdot \frac{1}{i^2}}{ \sum \limits_ {i=1}^K (n^{h}_{i} - n^{h'}_{i}) \cdot \frac{(i-1)}{i^2}},\end{equation}
and clearly, a statement such that $h \succ^\mathcal{D'} h'$ ($h \prec^\mathcal{D'} h'$) would amount to say that $\Delta(h,h')>0$ ($\Delta(h,h')<0$), or $$\theta > \frac{\sum \limits_{i=1}^K (n^{h'}_{i} - n^{h}_{i}) \cdot \frac{1}{i^2}}{ \sum \limits_{i=1}^K (n^{h}_{i} - n^{h'}_{i}) \cdot \frac{(i-1)}{i^2}} \quad (\theta < \frac{\sum \limits_{i=1}^K (n^{h'}_{i} - n^{h}_{i}) \cdot \frac{1}{i^2}}{ \sum \limits_{i=1}^K (n^{h}_{i} - n^{h'}_{i}) \cdot \frac{(i-1)}{i^2}}),$$
thus effectively providing us with an information about the user behaviour towards imprecision and set-valued predictions. Hence, if at some step of the process we know $\theta \in [\beta,\gamma]$, reducing by half this interval (whatever the reply of the user) amounts to finding a data set $\mathcal{D'}_\delta$ with $\delta=(\beta+\gamma)/2$ such that 
$$\frac{\sum \limits_{i=1}^K (n^{h'}_{i} - n^{h}_{i})\cdot \frac{1}{i^2}}{ \sum \limits_{i=1}^K (n^{h}_{i} - n^{h'}_{i}) \cdot \frac{(i-1)}{i^2}}=\delta$$
and asks the user to compare the two results. 

For instance, starting from $ \theta \in [1,3]$, $K=2$ and the two set-valued classifiers $h, h' \in \mathcal{H}$:
\begin{equation*}
    C(h)=\begin{pmatrix}
    n^h_1 & n^h_{3}\\
    n^h_2 & 0\\
    \end{pmatrix}
\end{equation*}
we do have,
$$
h \succ^{\mathcal{D}} h' \Rightarrow n^h_1 + n^h_2 \left( \frac{\theta}{4} + \frac{1}{4} \right) > n^{h'}_1 + n^{h'}_2 \left( \frac{\theta}{4} + \frac{1}{4} \right).
$$

The value $\delta = \nicefrac{(\beta + \gamma)}{2}=2$ is obtained for the choice
\begin{equation}\label{eq:solution_proportions}
\frac{n^{h}_1 - n^{h'}_1}{n^{h'}_2 - n^{h}_2} = \frac{3}{4}
\end{equation}
if we include the constraints $n^{h}_1 - n^{h'}_1 > 0$ and $n^{h'}_2 - n^{h}_2 > 0$, that would correspond to $h'$ being a cautious version of $h$, in addition to $n^{h}_i,n^{h'}_i \in [0,1]$ and $\sum_{i=1}^K n^{h}_1 \leq 1$ (as we do not include false predictions in the equations). Of course, this still leads to having an infinity of solutions, even if we consider $h$ to be a precise classifier, that is setting $n^{h}_2=0$. Another natural solution, if $h$ is a precise classifier, is to set $n^{h}_1$ near the actual accuracy of the classifier, so that the hypothetical data set $\mathcal{D'}$ reflects its behaviour. For instance, setting $n^{h}_1=0.8$ and $n^{h}_2=0$, we could pick as solution $n^{h'}_1=0.65$ and $n^{h'}_2=0.20$, so $20\%$ of correct doubleton predictions. 

\begin{remark}\label{rem:abstract_matrix}
In principle, we could present to the user any pair of matrices of the form~\eqref{eq:cm} that solves Equation~\eqref{eq:delta_null} for a given $\delta$ we want to test, as is done for instance in metric elicitation (see Section~\ref{sec:related_works} for details). This has the advantage of letting a lot of freedom to the elicitation process, but we do think that presenting abstract matrices of numbers that could be unrealistic is not very operational. 
\end{remark}

\subsection{Iterative construction procedure}

The batch procedure that consists in fixing quantities $n_i^h$ to reach the desired value $\theta^*$ with a corresponding confusion matrix is interesting, but practically limited, as it provides no guarantee that a sub-sample satisfying these proportions can be found, and as it requires one to pick a solution among an infinity of possible ones. Indeed, the simple Equation~\eqref{eq:solution_proportions} do have an infinity of solutions. Also, in Echo to Remark~\ref{rem:abstract_matrix}, it may be cognitively difficult for a user non-expert in classification (as is often the case in practice) to compare proportions rather than actual instances. 

\begin{algorithm}
\caption{Classifier robustness elicitation - instance procedure}
\label{alg:gen_elicitation_it}
\KwIn{Instances data set $\mathcal{D}$, target value $\delta$, classifiers $h,h'$, precision $\epsilon$}
\KwOut{Set of instances $\mathcal{D}_{el}$ to show to the user}

$\theta \gets \infty$\;
$\mathcal{D}_{el} \gets \emptyset$\;
\While{$|\theta - \delta| \geq \epsilon$}{
Sample $(x,y) \in \mathcal{D}$ \;
Compute proportions $h^{\mathcal{D}_{el} \cup \{(x,y)\} }$ and $h'^{\mathcal{D}_{el} \cup \{(x,y)\}}$ \;
Solve $\Delta(h,h')=0$ from Equation~\eqref{eq:perf_difference}, obtain $\theta^{\mathcal{D}_{el} \cup \{(x,y)\}}$ \;
\If{$|\theta^{\mathcal{D}_{el} \cup \{(x,y)\}} - \delta \leq |\theta - \delta|$ }
{$\mathcal{D}_{el} \gets \mathcal{D}_{el} \cup \{(x,y)\}$ \;
$\mathcal{D} \gets \mathcal{D} \setminus \{(x,y)\}$ \;
$\theta \gets \theta^{\mathcal{D}_{el} \cup \{(x,y)\}}$}
}
\Return{$\mathcal{D}_{el}$}
\end{algorithm}

We therefore propose an iterative algorithm, that only makes the assumption that the test set $\mathcal{D}$ will contain a sufficiently rich amount of situations where $h$ and $h'$ will not provide the same predictions. Algorithm~\ref{alg:gen_elicitation_it} presents a procedure to iteratively build a data set to present to the user. It can be sped up by ensuring that $\mathcal{D}$ only contains instances for which the two classifiers provide different predictions. 

Note that the algorithm will always produce a set of actual examples for which the two classifiers are different. In theory, it does not really matter whether $h,h'$ are good or bad classifiers, as we mainly want to measure the end-user disposition towards imprecision and set-valued instances, but in practice $h,h'$ should be reasonably good yet not perfect classifiers: if they are too bad, the user might not engage in the comparisons or find them irrelevant; if they are too good, there are high chances that we will not find situations where the added imprecision is beneficial.

\section{Related works}
\label{sec:related_works}

While we are not aware of other works that explicitly have the same intent as the one we have, there are other lines of work that are connected to ours.

\paragraph{Metric elicitation} this line of work \cite{hiranandani2019performance,hiranandani2020fair} is close to our approach yet is dedicated for point-prediction classification problems. To simplify the comparison, we discuss only the binary case. 
Metric elicitation approach consider two kinds of utilities to quantify the performance of a classifier $h$ from its confusion matrix $C(h)$ represented by two values: true positives (TP) and true negatives (TN).
They are the linear performance metric 
of whom the accuracy metric is an instance, and
the linear-fractional performance metric
of whom the $F_{\beta}$ measure and the Jaccard similarity coeficient (JAC) are instances.

When considering set-valued classification problems, evaluating with the utility $u_{\alpha}$ defined as in Equations (\ref{eq:param_utility}) and (\ref{eq:discount_utility}):
corresponds to considering a linear performance metric.
Indeed this evaluation corresponds to a linear combination of the coefficients in the confusion matrix in Equation~\ref{eq:cm}:
\begin{align}
\sum_{x \in \mathcal{D}} u_\theta(h(x),y)=n^h_{1} + \left(\frac{\theta + 1}{4}\right) \cdot n^h_{2} 
\end{align} 
In the proposed setting, utility is solely determined by the size of the prediction set. In this case, the difference between metric elicitation and our approach lies mainly in the structural properties of the confusion matrix. However, when the decision maker’s utility is intended to model cautiousness with respect to specific errors that may have catastrophic consequences, the problem becomes more involved. In such situations, it is necessary not only to identify the parameters of the utility function, but also to elicit the utilities associated with different cautious prediction sets \cite{kunitomo2025towards}.
 Furthermore, given the formal similarities, it would be interesting to investigate how the developments within metric elicitation can help to analyse the problem of robustness elicitation.

\paragraph{Credal proper scoring} recent lines of work~\cite{singh2025truthful,frohlich2024scoring} have proposed to extend the notion of proper scoring rules to the imprecise probability setting, a problem that already received quite some attention in the past~\cite{seidenfeld2012forecasting}. Such works are clearly connected to ours, as they explore how evaluation functions can incentivize the classifier (or the forecaster) to provide truthful probability sets, from which can be extracted set-valued predictions. 

The closest work in this line to ours is~\cite{singh2025truthful}, yet it remains mainly on the theoretical side and does not provide a fully operational protocol to elicit the proper scoring rule or decision maker attitude. It is also dedicated to credal sets, while our approach is agnostic to the way set-valued predictions are produced. It would clearly be interesting to close the gap between these more theoretical studies and our operational approach, so that one can nourish the other, and vice-versa.

\begin{figure*}[t]
\centering
\includegraphics[width=0.8\textwidth, trim={0 115pt 0 0},clip]{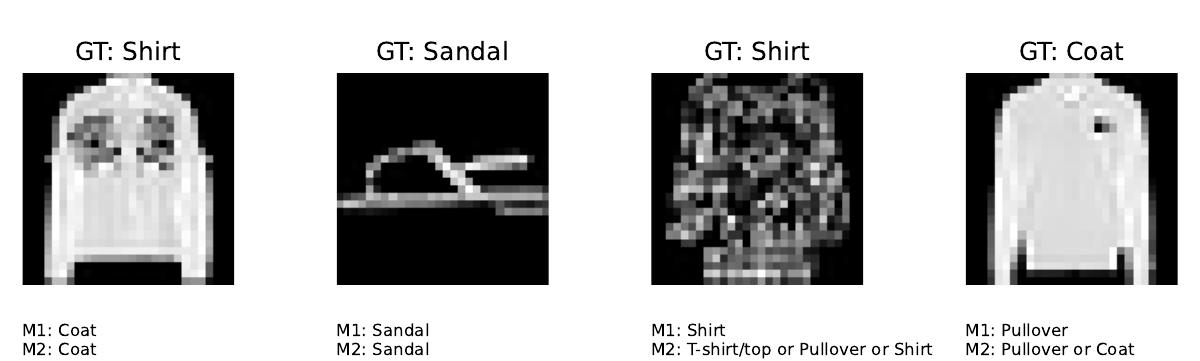}
\caption{Elicitation Examples for the Fashion MNIST dataset for $\alpha^*=1.4$. GT stands for ground-truth of the image. $M1$ is the prediction of the precise classifier, $M2$ of the (possibly) imprecise classifier}
\label{fig:fashion_mnist_elicitation_examples_alpha_1.4}
\end{figure*}

\section{Experimental results}
\label{sec:experiments}

\begin{figure*}
\centering
\includegraphics[width=0.8\linewidth]{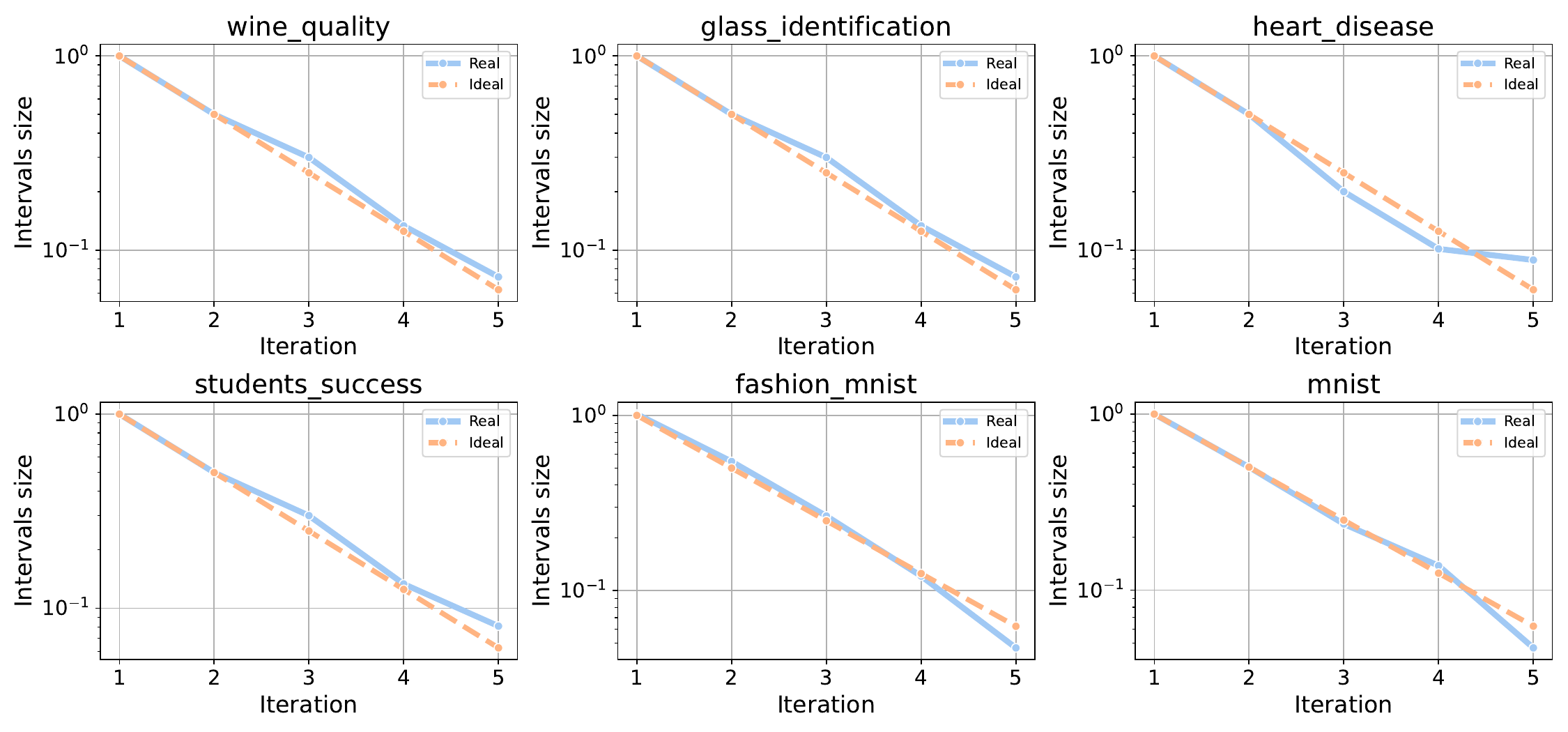}
\caption{Evolution of the interval size for parameter $\alpha^*$ across iterations for six datasets using Algorithm~\ref{alg:gen_elicitation_it}). The y-axis shows the interval size $[\beta, \gamma]$ in logarithmic scale, demonstrating exponential convergence towards the true parameter value. 
The ideal curve (orange) corresponds to halving the interval size at each iteration, while the actual curve (blue) deviates slightly from this ideal behavior due to the tolerance parameter $\epsilon$ in Algorithm~\ref{alg:gen_elicitation_it}.}
\label{fig:all_intervals_size_logscale}
\end{figure*}

The experiments were conducted on six datasets: four tabular datasets from the UCI Machine Learning Repository (Wine Quality, Glass Identification, Heart Disease, and Student Success) \cite{duaUCIRepository2019}, and two image classification datasets (Fashion-MNIST and MNIST) \cite{lecunMNISTHandwrittenDigit1998,xiaoFashionMNISTNovelImage2017}. For the tabular datasets, the data were split into training and test partitions using a 70/30 split. The family of models to compare was generated using XGBoost \cite{chenXGBoostScalableTreeBoosting2016} classifiers configured to produce multiclass probability estimates. These probabilities were then post-hoc calibrated with logistic regression, and set-valued predictions were obtained by maximizing the utility function in $\eqref{eq:param_utility}$ over different values of $\theta$. For the image datasets (Fashion-MNIST and MNIST), we trained neural network classifiers and generated set-valued predictions using the learning nondeterministic classifiers (NDC) method \cite{delcozLearningNondeterministicClassifiers2009}, varying the desired trade-off between Recall and Precision. In all cases, the same preprocessing, model selection, and evaluation protocol were used across datasets to make the results directly comparable.

\begin{table}[h]
\centering

  \begin{tabular}{lrrrrr}
\toprule
Iteration& \hspace{1em}\#1 & \hspace{1em}\#2 & \hspace{1em}\#3 & \hspace{1em}\#4 & \hspace{1em}\#5 \\
\midrule
wine quality & 8 & 8 & 7 & 7 & 11 \\
glass identification & 8 & 8 & 7 & 6 & 11 \\
heart disease & 6 & 9 & 9 & 8 & 10 \\
students success & 5 & 8 & 8 & 7 & 10 \\
fashion mnist & 7 & 9 & 12 & 4 & 6 \\
mnist & 4 & 5 & 8 & 10 & 5 \\
\bottomrule
\end{tabular}

\caption{Number of samples retrieved by Algorithm~\ref{alg:gen_elicitation_it}) to reach the target value $\alpha^*$. 
}
\label{tab:n_samples}
\end{table}

\begin{figure}
\centering
\includegraphics[width=\linewidth, trim={0 115pt 450pt 0},clip]{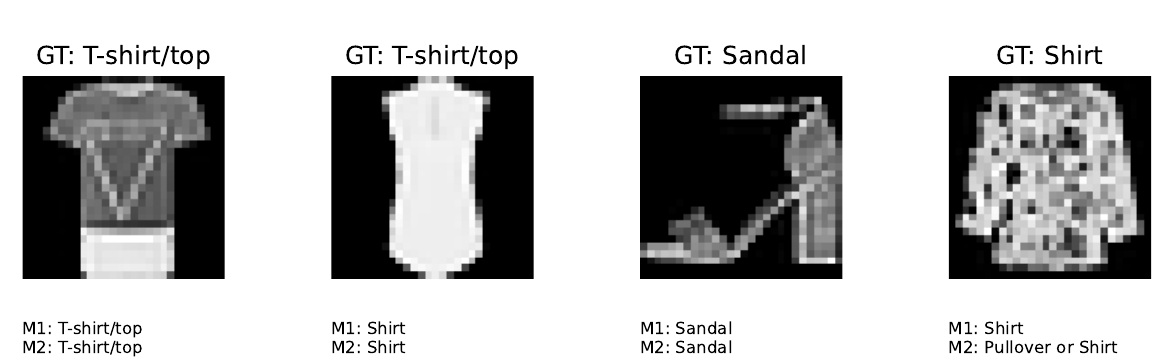}
\includegraphics[width=0.71\linewidth, trim={600pt 115pt 0 0},clip]{fashion_mnist_elicitation_examples_alpha_2.2.pdf}
\caption{Elicitation Examples for the Fashion MNIST dataset for $\alpha^*=2.2$. 
}
\label{fig:fashion_mnist_elicitation_examples_alpha_2.2}
\end{figure}

We choose a user value of $\theta^*=1.4$, and Algorithm \ref{alg:gen_elicitation_it} was configured with a tolerance of $\epsilon=0.05$. Similarly, we stopped the iterative procedure whenever the interval for $\theta$ was smaller than $0.05$.  Since the algorithm is sensitive to random initialization and may require multiple attempts to achieve convergence, we implemented a maximum of 50 trials per iteration. With this setting, the algorithm successfully converged for all datasets. The code is implemented in Python 3.13.5 and is available at 
\url{https://anonymous.4open.science/r/elicitation_imprecision_utilities-11A2}.
Figure \ref{fig:all_intervals_size_logscale} illustrates the convergence of the interval size for $\theta^*$ across iterations, demonstrating exponential convergence towards the true parameter value. Table \ref{tab:n_samples} details the number of samples required at each iteration to reach the target $\theta^*$ for each dataset.

Figures \ref{fig:fashion_mnist_elicitation_examples_alpha_1.4} and \ref{fig:fashion_mnist_elicitation_examples_alpha_2.2} showcase elicitation examples for Fashion-MNIST at $\theta^*=1.4$ and $\theta^*=2.2$, respectively, highlighting the trade-offs between informativeness and accuracy in the predictions presented to users. In an actual elicitation session, the user would be presented with such examples and asked to express their preference between the two models' predictions, thereby providing information about their tolerance for imprecision. The results demonstrate the effectiveness of the iterative elicitation procedure in converging to the target parameter value. The number of examples required varies across datasets but consistently lies within a manageable range, indicating the practicality of the approach for real-world applications. Indeed, on the tested examples, they do not go beyond 12, and it seems reasonable to assume that a domain expert or a user would be able and willing to compare a dozen (at most) items in order to identify an optimal behaviour, and therefore obtain better results for him/her.


\section{Conclusion and perspectives}

In this paper, we have introduced various ways to elicit the robustness level a user is willing to accept when this robustness is translated by set-valued classification, hopefully reflecting our uncertainty about a given prediction. We have illustrated and derived our approach on a commonly used utility function of the form given by Equation~\eqref{eq:param_utility}, for which $\Delta(h,h')=0$ can be solved efficiently. Our experiments also show that an incremental elicitation procedures showing to the user specific instances would require a reasonable number of examples to be presented to the decision maker. 

The most obvious follow-up to the current paper would be to proceed to empirical experiments involving actual users, especially as Table~\ref{tab:n_samples} indicates that it would put an acceptable burden on such users. However, our discussion also opens the door to other interesting questions, such as whether set-valued utility functions extending 0/1 utility should always be null when the produced set does not include the ground-truth class.

\bibliography{Bibli}






\end{document}